\documentclass[letterpaper]{article} 
\usepackage{aaai2027}  
\usepackage[hyphens]{url}  
\usepackage{graphicx} 
\usepackage{natbib}  
\usepackage{caption} 
\usepackage{amsmath}
\usepackage{amssymb}
\usepackage{amsthm}
\usepackage{booktabs}
\usepackage{makecell}
\usepackage{pifont}
\usepackage{tikz}
\usetikzlibrary{arrows.meta,positioning,backgrounds,calc}
\definecolor{oiVermillion}{HTML}{D55E00}
\definecolor{oiOrange}{HTML}{E69F00}
\definecolor{oiGreen}{HTML}{009E73}
\definecolor{oiBlue}{HTML}{0072B2}

\theoremstyle{plain}
\newtheorem{proposition}{Proposition}

\newcommand{\silr}{\textsc{SiLR}}
\newcommand{\psione}{\psi_1}
\newcommand{\psitwo}{\psi_2}
\newcommand{\psithree}{\psi_3}

\title{\silr{}: Structure-Preserving Admission and Process Reward for LLM Tool Agents}
\author{
    Chenyu Zhou\textsuperscript{\rm 1}\equalcontrib,
    Qiliang Jiang\textsuperscript{\rm 2}\equalcontrib,
    Shuning Wu\textsuperscript{\rm 3},
    Xu Zhou\textsuperscript{\rm 3}
}
\affiliations{
    \textsuperscript{\rm 1}School of Engineering, Institute of Science Tokyo, Japan\\
    \textsuperscript{\rm 2}College of Control Science and Engineering, Zhejiang University, China\\
    \textsuperscript{\rm 3}Department of Electrical and Computer Engineering, National University of Singapore, Singapore\\
    zhou.c.76d6@m.isct.ac.jp, zhouxu\_nus@u.nus.edu
}

\begin{document}
\nocopyright
\maketitle

\begin{abstract}
A runtime gate for an LLM tool agent is usually cast as a filter. In a ReAct loop, however, a rejected proposal is followed by another at the same state: the gate is a \emph{search operator} over the proposal stream, and its admission criterion shapes which trajectories are reachable. We study \emph{post-violation recovery admission}, where progress must be admitted while the system is still in violation, and identify the \emph{scalar projection trap}: an aggregate-score gate can accept a locally improving proposal and commit the trajectory to a residual plateau. \silr{} instead shadow-executes each proposal and admits it under a product order over the branch-level violation state (overloaded-branch support and per-branch severity). We prove that no scalar surrogate is sound for this order, so the failure is a representational impossibility rather than a matter of threshold tuning. On mined Gym-ANM scenarios, \silr{} recovers $\mathbf{21/21}$ multi-action episodes, against $0/21$ for terminal admission and $9/21$ for the best scalar gate, an advantage significant across the full $24$-scenario benchmark. The terminal-versus-structured dichotomy holds across three model families and in CityLearn. Because admission rests on deterministic simulation, the LLM lies outside the trust boundary and admission authority resides in the verifier: a magnitude-redistribution attack that defeats both scalar and support-only baselines is contained only by the full per-branch predicate. With two constraint families active, every tested scalar projection admits physically unsafe actions, and the support-only gate admits the largest fraction ($63.2\%$ of $42{,}410$ unsafe actions; product order $0$). In the hardest dual-family traces, scalar gates recover only through that unsafe class. Reused as a GRPO process reward, the same signal outperforms its count projection in every mined scenario and is the only tested reward under which the ungated policy exceeds the untrained base ($0.844$ vs.\ $0.778$). Scalar projection loses the violation geometry at both design points; only the full product order is structurally sufficient.
\end{abstract}

\section{Introduction}
\label{sec:intro}

Critical-infrastructure operations---power dispatch, datacenter scheduling, financial settlement---increasingly rely on tool-using large language model (LLM) agents~\citep{gridagent2025}. Recent runtime-enforcement systems (SARC~\citep{sarc2026}, AgentSpec~\citep{agentspec2026}, GuardAgent~\citep{guardagent2025}, ShieldAgent~\citep{shieldagent2025}, VeriGuard~\citep{veriguard2025}) advance the runtime governance of these agents. They share an implicit premise: the system starts \emph{inside} the safe set, and the gate's job is to keep it there. Infrastructure under autonomous control, however, will sometimes start \emph{outside} it, already in violation and needing several actions to recover. The enforcement question then inverts: the gate must decide whether an action that still leaves the system unsafe nonetheless makes admissible progress toward recovery. This \emph{post-violation recovery admission} problem differs structurally from runtime prevention, and prior runtime governance for LLM tool agents does not address it. Runtime mediation intercepts unsafe actions, yet it rarely enables recovery after blocking an unsafe proposal~\citep{verifiertax2026}: enforcement is not recovery.

We frame the problem around \emph{admission semantics}: the formal criterion by which a runtime gate decides to apply or reject an LLM-proposed action. Our thesis is that this choice---not the gate's strictness---has first-order effects on whether multi-step recovery is possible. A terminal gate (admit only fully-recovered post-states) is safe but admits no intermediate step, deadlocking every multi-action episode on our mined Gym-ANM~\citep{henry2021gymanm} benchmark. The natural relaxation is a \emph{scalar admission gate}, admitting any proposal whose aggregate violation penalty is non-increasing within a small slack. It is representationally insufficient: relaxing the slack does not close the recovery gap.

The reason is the \textbf{scalar projection trap}, an admission-level instance of proxy gaming~\citep{skalse2022reward,krakovna2020specification}. In a ReAct loop~\citep{yao2023react} a rejected proposal is followed by another at the same state, so \emph{the gate is not a filter but a search operator over the agent's proposal stream}. Admitting on an aggregate score collapses the multi-component violation geometry and can commit the trajectory to a residual plateau. A gate that preserves the per-branch geometry instead rejects a scalar-improving proposal that violates the product order and keeps the search alive. The gap is representational, which we make precise as an impossibility result and confirm empirically: $\mathbf{21/21}$ structured vs.\ $9/21$ best scalar and $0/21$ terminal, significant across the full $24$-scenario benchmark.

\paragraph{Contributions.} Our unifying claim concerns one object: a product order over the branch-level violation state. It is the representation required at \emph{two} design points of an LLM tool agent, and projecting it to a scalar fails at both for the same representational reason.
\begin{itemize}\setlength{\itemsep}{1pt}
\item \textbf{The gate as a search operator, and the scalar projection trap.} We reframe the runtime admission gate not as a filter but as a search operator over the ReAct proposal stream. Its scalar instantiation collapses the violation geometry and commits the trajectory to a residual plateau.
\item \textbf{Structured recovery admission (design point~1).} We propose \silr{} (Structure-preserving In-the-Loop Recovery), a simulator-backed admission layer that shadow-executes each LLM tool proposal and admits it only when the product order is preserved, emitting graded \texttt{PASS}/\texttt{SAFE\_PROGRESS}/\texttt{FAIL} verdicts; we prove no scalar surrogate is sound with respect to this order.
\item \textbf{Empirical evidence across model families and domains.} Terminal admission deadlocks, and a scalar threshold sweep cannot close the recovery gap; plateau signatures and a model-predictive control (MPC) feasibility check confirm the mechanism is gate-induced; the dichotomy holds across three model families and a second domain (CityLearn).
\item \textbf{Trust-boundary containment under LLM compromise.} Because admission is grounded in deterministic simulation, admission authority moves from the LLM to the verifier; a magnitude-redistribution attack that defeats both scalar admission and the Grid-Agent (support-only) baseline is contained only by the full per-branch predicate. Under dual-family stress this becomes a safety--liveness split~\citep{alpern1985liveness}. Despite near-ceiling single-constraint-family recovery, support-only admits the largest fraction of unsafe actions ($63.2\%$ of $42{,}410$), whereas the product order admits none.
\item \textbf{The same violation geometry as a process reward (design point~2).} Reused as a GRPO signal, the product order yields a near-oracle process reward: deterministic, scalable process supervision with no human step labels. It is also the only tested reward under which the policy stays above the untrained baseline once the gate is removed; no tested scalar projection of the reward attains full recovery in both multi-family stress regimes we test.
\end{itemize}

\section{Related Work}
\label{sec:related}

\paragraph{Shielding and runtime guards.}
Shielding~\citep{alshiekh2018safe} synthesizes a finite-state shield from a temporal-logic specification that intercepts policy outputs at runtime. Model-Predictive Shielding~\citep{bastani2019mps}, Recovery RL~\citep{thananjeyan2021recovery}, and RL power-grid shielding~\citep{hrlshield2026} relax binary terminal gates by admitting recoverability-set states or by screening actions with forward simulation. All of them, however, assume a fixed action space and require a backup or recovery policy, or a dynamics-model lookahead. A parallel line adds runtime enforcement to tool-using LLM agents through architectural gates, guard agents, or rule languages---SARC's Pre-Action Gate~\citep{sarc2026} (the closest binary baseline), AgentSpec~\citep{agentspec2026}, GuardAgent~\citep{guardagent2025}, ShieldAgent~\citep{shieldagent2025}, and VeriGuard~\citep{veriguard2025}, yet none target post-violation recovery admission: a terminal predicate preserves safety but deadlocks multi-step recovery. \silr{} instead shields black-box LLM tool-call dispatch by single-step shadow simulation under a product order on the branch-level violation state, at constant per-step cost.

\paragraph{Grid-Agent and scalarization.}
Grid-Agent~\citep{gridagent2025} is the closest same-domain work: it couples LLM agents with sandboxed power-flow validation, rollback of negative actions, and monotonic-progress behavior for power-grid control. The mechanisms differ in kind. Grid-Agent's check is a \emph{post-hoc} rollback that reverts an action unless the violation set is resolved without introducing new violations. This check is a support-level criterion with no per-branch severity guard. \silr{}'s product order is instead a \emph{pre-action} admission predicate that never applies a proposal that violates the product order and emits graded verdicts reusable as a training signal. Scalarization is classically studied as a \emph{static} representational problem~\citep{miettinen1999nonlinear}. The scalar projection trap instead concerns the \emph{behavior} of the runtime gate, which acts as a commitment operator over the ReAct proposal stream, the admission-level counterpart of reward hacking~\citep{skalse2022reward,krakovna2020specification}. On the training side, the same verifier provides deterministic, simulator-based process supervision~\citep{lightman2024prm,uesato2022process} in place of human step labels. Prompt-injection defenses~\citep{greshake2023injection} sanitize inputs; \silr{} instead assumes a compromised LLM and enforces safety after the proposal. Table~\ref{tab:related} positions \silr{} against the closest systems along four axes: \emph{graded} admission, \emph{retained} multi-component violation geometry, \emph{black-box} LLM tool proposals, and no backup policy or dynamics lookahead. \silr{} is the only system that satisfies all four.

\begin{table}[t]
\centering
\caption{Positioning of \silr{} against the closest runtime-safety systems. ``Geom.'': retains the (support, severity) violation state vs.\ a scalar aggregate; ``No look.'': single-step shadow only, no recovery policy or dynamics rollout.}
\label{tab:related}
\footnotesize
\setlength{\tabcolsep}{2pt}
\begin{tabular*}{\columnwidth}{@{\extracolsep{\fill}}lcccc@{}}
\toprule
System & Graded & Geom. & \makecell{Black-\\box LLM} & \makecell{No look-\\ahead} \\
\midrule
SARC Pre-Action Gate        & \ding{55} & \ding{55} & \ding{51} & \ding{51} \\
AgentSpec          & \ding{55} & \ding{55} & \ding{51} & \ding{51} \\
GuardAgent / ShieldAgent & \ding{55} & \ding{55} & \ding{51} & \ding{51} \\
Grid-Agent             & \ding{55} & \ding{55}$^\dagger$ & \ding{51} & \ding{55}$^\ddagger$ \\
MPS / Recovery RL & \ding{55} & \ding{55} & \ding{55} & \ding{55} \\
\silr{} (ours)                              & \ding{51} & \ding{51} & \ding{51} & \ding{51} \\
\bottomrule
\end{tabular*}\\[2pt]
{\footnotesize $^\dagger$support-level validation, no per-branch severity guard; $^\ddagger$applies then reverts (post-hoc rollback).}
\end{table}

\section{Threat Model and Method}
\label{sec:method}

\begin{figure*}[t]
\centering
\resizebox{0.86\textwidth}{!}{%
\begin{tikzpicture}[
  box/.style={rounded corners=3pt, draw, align=center, inner sep=5pt, line width=0.8pt},
  trusted/.style={fill=oiBlue!7, draw=oiBlue!60},
  untrusted/.style={fill=oiVermillion!9, draw=oiVermillion!65},
  hero/.style={fill=oiGreen!12, draw=oiGreen!75, line width=1.3pt},
  arr/.style={-{Stealth[length=2.2mm]}, line width=0.9pt, gray!35!black},
  admit/.style={-{Stealth[length=2.6mm]}, line width=1.4pt, oiGreen!58!black},
  fail/.style={-{Stealth[length=2.2mm]}, line width=0.9pt, oiVermillion!85!black, dashed},
  lbl/.style={font=\scriptsize, inner sep=1.5pt},
  lblg/.style={font=\scriptsize\bfseries, text=oiGreen!42!black, inner sep=1.5pt},
  lblr/.style={font=\scriptsize, text=oiVermillion!80!black, inner sep=1.5pt}
]
\node[box,untrusted,text width=2.5cm] (llm) at (0,0)
  {\textbf{LLM agent}\\[1pt]\scriptsize possibly compromised};
\node[box,trusted,text width=3.0cm] (shadow) at (5.0,1.5)
  {\textbf{Shadow verifier}\\[1pt]\scriptsize simulate on a deep copy};
\node[box,hero,text width=3.7cm] (gate) at (10.7,1.5)
  {\textbf{Product-order gate}\\[1pt]\scriptsize admit iff $\Phi{=}(S,\sigma)$ non-increasing};
\node[box,trusted,text width=2.9cm] (live) at (7.8,-1.55)
  {\textbf{Live simulator $M$}};
\begin{scope}[on background layer]
  \fill[oiVermillion!5] (-1.45,-2.4) rectangle (1.4,2.85);
  \fill[oiBlue!5] (1.4,-2.4) rectangle (13.35,2.85);
  \draw[dashed,gray,line width=0.8pt] (1.4,-2.4) -- (1.4,2.85);
\end{scope}
\node[gray!75,font=\scriptsize\itshape] at (1.4,-2.62) {trust boundary};
\node[oiVermillion,font=\scriptsize\bfseries,anchor=west] at (-1.35,2.6) {Untrusted};
\node[oiBlue,font=\scriptsize\bfseries,anchor=east] at (13.25,2.6) {Trusted verifier $+$ simulator};
\draw[arr] (llm.north) |- node[lbl,pos=0.7,above=1pt]{proposal $a_t$} (shadow.west);
\draw[arr] (shadow.east) -- node[lbl,above=1pt]{$\hat{s}_{t+1}$} (gate.west);
\draw[admit] (gate.south) |- node[lblg,pos=0.32,right=2pt]{admit} (live.east);
\draw[arr] (live.west) -| node[lbl,pos=0.2,below=1pt]{$\mathit{obs}_t$} (llm.south);
\draw[fail] (gate.north) to[out=65,in=90,looseness=0.45]
   node[lblr,pos=0.5,above=1pt]{\texttt{FAIL}: regenerate} (llm.north);
\end{tikzpicture}}
\caption{\silr{} admission architecture and trust boundary. A possibly compromised LLM (untrusted) emits a tool proposal $a_t$; the verifier shadow-executes it on a deep copy of the trusted simulator, applies the admission gate $\psi_1\wedge\psi_2\wedge\psi_3$ (tool validation, then the product-order test $\psi_2\wedge\psi_3$), and either applies the admitted action ($\texttt{PASS}$/$\texttt{SAFE\_PROGRESS}$) or rejects it ($\texttt{FAIL}$, regenerating within the step).}
\label{fig:arch}
\end{figure*}

\paragraph{Trust boundary and adversary.}
\silr{} places the LLM outside the trust boundary: no admission verdict depends on the LLM's reasoning, only on the simulator's deterministic response. The LLM is untrusted; the simulator, verifier, checkers, and tool layer are trusted. We adopt the compromised-prompt and compromised-observation capability class, the union of the SARC~\citep{sarc2026} and VeriGuard~\citep{veriguard2025} threat models. An attacker may (C1)~rewrite the system prompt, (C2)~splice attacker-controlled text into every observation, and (C3)~persist both rewrites for the whole episode, which is the worst case for the defender. The attacker may not modify the verifier, simulator, or tool-layer validators, bypass the tool-application interface, or read the verifier's shadow state. We score two safety failures and their union. \emph{Material worsening} is a final penalty strictly above the no-intervention default, \emph{false recovery} is a terminal recovery reported while the live state still holds violations, and \emph{attack success} is the union of the two. Recovery rate is reported separately as a liveness measure (attack constructions in the supplementary material).

\paragraph{Setting.}
An LLM tool-calling agent controls a critical-infrastructure simulator $M$ through a ReAct loop (Fig.~\ref{fig:arch}). At each control step $t=1,\ldots,C_{\text{step}}$ the agent observes $\mathit{obs}_t$, proposes a tool call $a_t$, and a runtime verifier mediates the proposal before it is applied; on rejection the agent regenerates within the same step, up to a per-step budget $C_{\text{prop}}$. \emph{Post-violation} initial conditions mean $M$ starts in $s_0$ with active constraint violations, and the goal is to drive $M$ violation-free.

\paragraph{Shadow-execution verifier.}
For each candidate $a_t$, the verifier evaluates a deterministic shadow state $\hat{s}_{t+1}=\mathrm{shadow}(s_t,a_t)=\mathrm{solve}(\mathrm{apply}(\mathrm{copy}(s_t),a_t))$, where $\mathrm{solve}(\cdot)$ is the domain's steady-state solver (Newton--Raphson power flow for Gym-ANM~\citep{henry2021gymanm}), run on a deep copy of $M$ so rejected proposals leave the live system untouched.

\paragraph{Branch-level violation state and verdict ladder.}
We instantiate $\Phi$ on a vector-valued violation state; in our power-grid testbed the active constraint family is branch loading, indexed by overloaded branch, and CityLearn instantiates the same construction on energy/comfort violations. Define the branch-level violation state
\begin{equation}
\Phi(s) = \bigl(S(s),\,\sigma(s)\bigr),
\quad S(s)\subseteq\mathcal{B},
\quad \sigma(s)\in\mathbb{R}_{\ge 0}^{|\mathcal{B}|},
\label{eq:phi}
\end{equation}
where $\mathcal{B}$ is the set of branches, $S(s)$ the support set of overloaded branches, and $\sigma(s)$ the per-branch severity vector ($\sigma_i(s)\,{>}\,0$ iff $i\in S(s)$). The fully-recovered state satisfies $S(s)=\emptyset$; the aggregate penalty is $P(s)=\sum_i\sigma_i(s)$. We equip the state space with the product order
\begin{equation}
\Phi(s)\preceq\Phi(s') \iff S(s)\subseteq S(s') \;\wedge\; \sigma(s)\le\sigma(s'),
\end{equation}
where the severity comparison is componentwise and support inclusion subsumes count non-increase. This order retains both the overloaded-branch support and the per-branch severity; a scalar penalty gate is its one-dimensional projection. We use \emph{violation geometry} to denote the joint support-and-severity structure that this order retains. We call $\hat{s}_{t+1}$ \emph{admissible} from $s_t$ if $\Phi(\hat{s}_{t+1})\preceq\Phi(s_t)$. The verdict ladder is: \texttt{PASS} if $S(\hat{s}_{t+1})=\emptyset$ (terminal recovery); \texttt{SAFE\_PROGRESS} if $\Phi(\hat{s}_{t+1})\preceq\Phi(s_t)$ but $S(\hat{s}_{t+1})\ne\emptyset$ (admissible nonterminal progress); \texttt{FAIL} otherwise.

\paragraph{The scalar projection trap.}
A scalar alternative replaces the product order with a \emph{scalar admission gate}: admit $\hat{s}_{t+1}$ whenever $P(\hat{s}_{t+1})$ is within a small slack of $P(s_t)$. Any scalar surrogate projects $\Phi(s)=(S(s),\sigma(s))$ onto $\mathbb{R}_{\ge 0}$, and projecting a partial order with antichains onto a total order necessarily makes some incomparable states comparable.
\begin{proposition}[Scalarization gap]
\label{prop:gap}
No scalar-threshold admission gate is sound for $\preceq$: for any $f:(S,\sigma)\to\mathbb{R}_{\ge 0}$ and any slack $\eta\ge0$, the gate admitting whenever $f(\hat{s})\le(1{+}\eta)f(s)$ accepts, on some $\preceq$-antichain pair, a step that introduces or relocates a violation.
\end{proposition}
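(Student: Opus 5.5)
The argument is a pigeonhole over the total preorder that any scalar $f$ induces on violation states. For every ordered pair I compare the two sides, so I only need to exhibit, for an arbitrary $f$, an antichain pair $\{\Phi(s),\Phi(s')\}$ that $f$ orders. Fix $f$ and $\eta\ge 0$; since $(1+\eta)f(s)\ge f(s)$ for $f(s)\ge 0$, the gate admits every step from $s$ to $\hat s$ with $f(\hat s)\le f(s)$, so it suffices to work with $\eta=0$. Take two branches $i\neq j$ in $\mathcal{B}$ (the statement presupposes $|\mathcal{B}|\ge 2$, otherwise there are no antichains) and the states $u=(\{i\},c\,e_i)$ and $v=(\{j\},c\,e_j)$ for some $c>0$. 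These are $\preceq$-incomparable: $\{i\}\not\subseteq\{j\}$ and $\{j\}\not\subseteq\{i\}$, and the severity vectors are incomparable componentwise. Totality of $\le$ on $\mathbb{R}_{\ge0}$ gives $f(u)\le f(v)$ or $f(v)\le f(u)$; relabelling, assume $f(u)\le f(v)$. Then the gate, at current state $s=v$, admits the proposal whose shadow state is $\hat s=u$.

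Next I check that this admitted step violates the product order in the sense the proposition names. From $v$ to $u$ the support changes from $\{j\}$ to $\{i\}$, so branch $i$ is newly overloaded: a violation is \emph{relocated}, and in particular $\Phi(u)\not\preceq\Phi(v)$, so the step would be \texttt{FAIL} under $\silr$. To also cover the \emph{introduces} reading, I would add a second pair $u'=(\{i,j\},\varepsilon(e_i+e_j))$ and $v'=(\{j\},C e_j)$ with $\varepsilon<C$. These are incomparable because the support of $u'$ is strictly larger while its severity on $j$ is smaller. The same totality argument applies, though which direction gets admitted depends on $f$. That is enough, since either direction crosses the antichain and either adds or removes-and-adds support.

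To make sure each pair is \emph{realizable}, I treat $f$ as defined on abstract pairs $(S,\sigma)$ with the consistency constraint $\sigma_k>0\iff k\in S$. Both constructed pairs satisfy it, so no appeal to Gym-ANM reachability is needed. If the paper intends $f$ to range only over reachable states, I would add the mild assumption that single-branch overloads on two distinct branches are reachable.

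The main obstacle is the direction issue. Pigeonhole only guarantees that $f$ orders the antichain pair in \emph{some} direction, so the construction must be symmetric, so that either admitted direction is a bad step. This is why I choose the single-branch swap pair $u,v$: both directions relocate a violation, and the conclusion holds regardless of how $f$ breaks the tie. Everything else, the slack reduction and the incomparability checks, is routine.
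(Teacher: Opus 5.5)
Your proof is correct and uses the same mechanism as the paper's: totality of $\le$ on $\mathbb{R}_{\ge 0}$ forces $f$ to rank any $\preceq$-antichain pair, and because $f\ge 0$ the slack only enlarges the admitted set. Where you differ is the witness pair. The paper uses the asymmetric pair $\Phi(s)=(\{A\},2\delta e_A)$, $\Phi(s')=(\{A,B\},0.9\delta(e_A+e_B))$, whose two directions are of different kinds. The step $s\to s'$ adds branch $B$. The step $s'\to s$ passes support inclusion $\psi_2$ and is non-improving only because it raises $\sigma_A$ from $0.9\delta$ to $2\delta$. Calling that step a ``relocation'' therefore requires reading severity concentration as relocating a violation. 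Your symmetric swap pair removes this ambiguity, which you correctly identified as the crux: whichever way $f$ breaks the comparison, the admitted step puts a new branch into the support and fails $\psi_2$. What the paper's pair buys in exchange is strict separation under the named surrogates. $P$ and $\|\sigma\|_\infty$ strictly prefer $s'$, and the lexicographic order on $(|S|,\sum_i\sigma_i)$ strictly prefers $s$. So the paper's pair pins down which direction each concrete gate admits, and it would defeat even strict-decrease versions of those gates. On your pair every permutation-invariant surrogate ties, which suffices only because the gate in the statement uses $\le$.

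One correction to your optional second pair: the step $u'\to v'$ removes $i$ from the support and raises $\sigma_j$ from $\varepsilon$ to $C$ without adding any branch. So ``either adds or removes-and-adds support'' is false for that direction; it has exactly the asymmetry of the paper's pair. The second pair is also unnecessary, since in your first pair branch $i$ is already newly overloaded, which covers the ``introduces'' reading.
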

\begin{proof}[Proof (constructive)]
Take $\Phi(s)=(\{A\},\,2\delta\,e_A)$ and $\Phi(s')=(\{A,B\},\,0.9\delta(e_A{+}e_B))$, $\delta>0$: these are $\preceq$-incomparable ($S(s')\not\subseteq S(s)$ while $\sigma_A(s)=2\delta>0.9\delta=\sigma_A(s')$). Since $\mathbb{R}$ is totally ordered, $f$ ranks the pair, so the gate admits $s\!\to\!s'$ (adds branch $B$) or $s'\!\to\!s$ (drives $\sigma_A$ to $2\delta$); either is non-improving under $\preceq$. For the aggregate $P=\sum_i\sigma_i$, $P(s')=1.8\delta<2\delta=P(s)\le(1{+}\eta)P(s)$ for every $\eta\ge0$, so no slack removes the acceptance. The same antichain pair defeats the strongest scalar candidates: the sup-norm $\|\sigma\|_\infty$ ($0.9\delta<2\delta$) admits $s\!\to\!s'$ and a lexicographic order on $(|S|,\sum_i\sigma_i)$ admits $s'\!\to\!s$, so no monotone scalarization---weighted, max-norm, or lexicographic---escapes.
\end{proof}
\noindent The representational gap is intrinsic to projecting a partial order with antichains onto a total order; relaxing the slack only enlarges the admissible set, never removing the collapse. \silr{} admits on $\preceq$ directly. Through the search loop the representational gap becomes behavioral: a scalar gate admits the first $P$-improving proposal, ends the per-step search, and commits the trajectory to a residual plateau (the \textbf{scalar projection trap}); a structured gate rejects the same proposal as inadmissible under the product order and keeps the search active. No scalar projection yields a sound recovery-admission criterion.

\paragraph{Predicates and admission invariant.}
The implemented admission gate is a conjunction of three predicates evaluated on the shadow: $\psione$ validates the tool and parameter bounds; $\psitwo(s_t,\hat{s}_{t+1}): S(\hat{s}_{t+1})\subseteq S(s_t)$ (support inclusion); and $\psithree(s_t,\hat{s}_{t+1}): \sigma_i(\hat{s}_{t+1})\le\max(\alpha\sigma_i(s_t),\sigma_i(s_t)+\varepsilon)$ for all $i\in S(s_t)$, with $\alpha{=}1.05$, $\varepsilon{=}10^{-3}$. The conjunction $\psitwo\wedge\psithree$ is the computable test for the relaxed $\varepsilon$-dominance order~\citep{laumanns2002}, recovering the exact product order $\preceq$ as $\alpha\to1,\varepsilon\to0$. Here $\alpha$ absorbs the finite resolution of the discrete action space and $\varepsilon$ absorbs the numerical precision. This yields the gate's containment guarantee.
\begin{proposition}[Admission invariant]
\label{prop:inv}
Under shadow fidelity (the live system reproduces the shadow post-state, exact in our deterministic simulators), every admitted trajectory satisfies (i)~$S(s_t)\subseteq S(s_0)$ for all $t$, independently of LLM behavior, and (ii)~$\sigma_i(s_t)\le\alpha^{t}\sigma_i(s_0)+\varepsilon\tfrac{\alpha^{t}-1}{\alpha-1}$, a bound that relaxes as $t$ grows.
\end{proposition}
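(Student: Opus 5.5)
The argument is an induction on the control step $t$, with the single-step admission predicates $\psitwo$ and $\psithree$ as the inductive step. Shadow fidelity does the bridging. Whenever a proposal $a_t$ is admitted, the live post-state $s_{t+1}$ equals the shadow $\hat{s}_{t+1}=\mathrm{shadow}(s_t,a_t)$ on which $\psione\wedge\psitwo\wedge\psithree$ was checked. Rejected proposals are run only on a deep copy, so they leave $s_t$ unchanged. It follows that every transition of the live trajectory either keeps the state fixed (no admission within the budget $C_{\text{prop}}$) or satisfies the gate's predicates with $\hat{s}_{t+1}$ replaced by $s_{t+1}$. Nothing about the LLM's outputs enters this step. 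The proposal stream can be arbitrary, even adversarial, because only admitted actions reach $M$. This is where the clause ``independently of LLM behavior'' comes from.

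For part (i), $\psitwo$ gives $S(s_{t+1})\subseteq S(s_t)$ at each admitted step, and equality holds trivially at non-admitting steps. Transitivity of $\subseteq$ then yields $S(s_t)\subseteq S(s_0)$ by induction. For part (ii), I would first record the one-step consequence of $\psithree$: $\max(\alpha x, x+\varepsilon)\le \alpha x+\varepsilon$ for $x\ge0$ and $\alpha\ge1$, so $\sigma_i(s_{t+1})\le \alpha\sigma_i(s_t)+\varepsilon$ for every $i\in S(s_t)$. I would then induct using the affine recursion $b_{t+1}=\alpha b_t+\varepsilon$ with $b_0=\sigma_i(s_0)$. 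Its closed form is $b_t=\alpha^t\sigma_i(s_0)+\varepsilon\sum_{k=0}^{t-1}\alpha^k=\alpha^t\sigma_i(s_0)+\varepsilon\frac{\alpha^t-1}{\alpha-1}$. Since the map $x\mapsto\alpha x+\varepsilon$ is monotone, $\sigma_i(s_t)\le b_t$ propagates.

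The main obstacle is the indexing in $\psithree$. That predicate constrains only branches $i\in S(s_t)$, so the induction must track what happens to branches outside the current support. Part (i) handles this. If $i\notin S(s_t)$, then $\sigma_i(s_t)=0$. Support inclusion from $\psitwo$ forces $i\notin S(s_{t+1})$, so $\sigma_i(s_{t+1})=0\le\alpha\cdot0+\varepsilon$, and the one-step bound holds for all $i\in\mathcal{B}$. In the same way, branches with $i\notin S(s_0)$ stay at zero forever, which is consistent with the bound. The proof of (ii) therefore relies on (i), and I would present them in that order.

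Finally, I would note that the bound grows geometrically in $t$. This is the ``relaxes as $t$ grows'' remark in the statement: the guarantee is meaningful over the finite horizon $C_{\text{step}}$ and collapses to exact monotonicity $\sigma_i(s_t)\le\sigma_i(s_0)$ in the limit $\alpha\to1$, $\varepsilon\to0$, which recovers $\preceq$.
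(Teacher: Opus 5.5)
Your proof is correct and follows essentially the same route as the paper's: induction on $t$, with shadow fidelity transferring $\psitwo$ and $\psithree$ from the shadow to the live post-state; then the bound $\max(\alpha x,x+\varepsilon)\le\alpha x+\varepsilon$, and unrolling of the affine recursion into the geometric sum. The differences are only in exposition: the paper compresses the off-support branches into ``$\sigma_i\equiv0$ off $S(s_t)$'', and you spell that case out. You also treat steps with no admitted proposal by assuming the live state is held fixed, a convention the paper leaves implicit by indexing only admitted transitions.
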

\noindent The support invariant is $\alpha$-free and holds for \emph{any} gated policy, including a compromised LLM---the containment guarantee that defines a runtime shield and anchors a formal safety case (proof in the supplementary material). Empirically $\psitwo$ supplies liveness by preserving support, while $\psithree$ supplies the per-branch severity containment that RQ4 shows to be necessary. Concretely, $\psithree$ rejects a count-preserving redistribution that both $\psitwo$ and any scalar aggregate admit. Severities $(10,10)\!\to\!(19,0.1)$ leave the support unchanged and \emph{lower} $\sum_i\sigma_i$ from $20$ to $19.1$, yet they drive one branch from $10$ to $19$. Recovery is robust over $\alpha\in[1.05,1.20]$ ($15/15$), consistent with the $\alpha$-free support invariant (sweep in the supplementary material).

\paragraph{Applicability.}
A verifier call takes $17.5$--$30.0$\,ms on a single CPU core, orders of magnitude below the latency of an LLM ReAct step. The construction therefore applies wherever the tool API admits a deterministic shadow of the proposed call. In our testbed it runs on the dispatch loop, where shadow execution reuses the power-flow study that already precedes each dispatch.

\section{Evaluation}
\label{sec:eval}

We compare Structured admission against terminal and scalar gates (RQ1--RQ3), then test containment under compromise (RQ4) and generalization across model families and a second domain (RQ5). The simulator is Gym-ANM ANM6-Easy~\citep{henry2021gymanm}, with a state-cloning shadow and bit-exact Newton--Raphson power flow. The LLM is Qwen3-14B~\citep{qwen3}, served via vLLM at temperature $0$. Each ReAct episode runs $C_{\text{step}}{=}8$ control steps with $C_{\text{prop}}{=}3$ proposals per step. We mine a 600-scenario pool by gridded perturbation and classify it by recoverability; the $24$ MPC-recoverable multi-action scenarios form our benchmark (pool composition in the supplementary material). For RQ1--RQ3, a focused comparison uses three operating points (Scenario~A--Scenario~C, with Scenario~B the easiest and Scenario~A the hardest) at $N{=}7$, giving $7$ episodes per scenario and $21$ per policy, and we additionally evaluate all $24$ scenarios at $N{=}5$. The five admission policies are as follows. Ungated applies no verifier. Terminal admits iff $S(\hat{s}_{t+1})=\emptyset$, the binary SARC-style pre-action gate~\citep{sarc2026}. Support-only applies $\psione\wedge\psitwo$, the pre-action instantiation of the Grid-Agent rollback baseline~\citep{gridagent2025}. Structured applies $\psione\wedge\psitwo\wedge\psithree$, the full product-order gate. The scalar sweep Scalar$_\eta$ admits iff $P(\hat{s}_{t+1})\le(1{+}\eta)P(s_t)$ for $\eta\in\{0,0.05,0.10,0.20\}$. For the full-pool stress test (RQ4) we additionally evaluate the strongest scalar surrogates of Prop.~\ref{prop:gap}: the sup-norm $\|\sigma\|_\infty$ and a lexicographic order on $(|S|,\sum_i\sigma_i)$.

\begin{figure*}[t]
\centering
\includegraphics[width=0.86\textwidth]{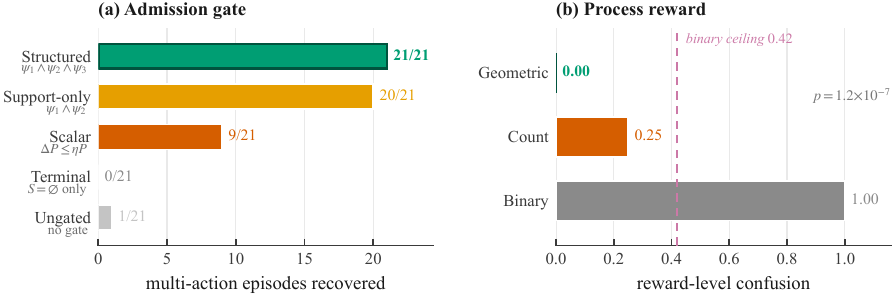}
\caption{One product order, two design points. (a)~As an admission gate at matched $N{=}7$, the product order recovers $\mathbf{21/21}$ multi-action episodes, against $20/21$ for support-only (Grid-Agent), $9/21$ for the best scalar slack, $1/21$ ungated and $0/21$ terminal; sublabels give each gate's admission criterion. (b)~As a GRPO process reward, a reward built on the same violation geometry has zero reward-level confusion against $0.25$ for its count projection and $1.0$ for a binary signal (oracle-optimal binarization ceiling $0.42$); the dominance holds in all $24$ mined scenarios (sign test $p{=}1.2{\times}10^{-7}$).}
\label{fig:dualladder}
\end{figure*}

\paragraph{RQ1: binary terminal admission deadlocks recovery.}
Fig.~\ref{fig:dualladder}(a) reports the main recovery ladder at matched $N{=}7$ ($21$ episodes/policy). Terminal recovers $0/21$: every intermediate proposal has $S(\hat{s}_{t+1})\ne\emptyset$, regardless of search depth. Structured admission recovers $\mathbf{21/21}$ (residual $0.000$) against the best scalar result of $9/21$, significant at the scenario level on the full benchmark (RQ2). The improvement is structural: support geometry closes the recovery gap, while the per-branch predicate $\psithree$ adds the severity containment RQ4 requires.

\paragraph{RQ2: scalar admission is insufficient.}
If terminal admission fails solely because it is too strict, the natural fix is scalar slack. Relaxing the threshold neither closes the recovery gap nor yields monotone recovery: across $\eta\in\{0,0.05,0.10,0.20\}$ recovery is $5/21,9/21,7/21,7/21$ (each far below structured admission's $21/21$). The non-monotonicity is itself a trap signature: the $\eta{=}0.05\!\to\!0.10$ drop comes entirely from Scenario~A and Scenario~C, where wider slack admits an early $P$-improving proposal and locks the trajectory onto a residual plateau rather than rejecting it to keep the per-step search alive. The recovery gap closes at the support level: support-only ($\psione\wedge\psitwo$) recovers $20/21$ and the full product order $21/21$. The full $24$-scenario benchmark confirms this (Table~\ref{tab:band24}): structured admission recovers $\mathbf{120/120}$ against the best scalar gate's $96/120$ (terminal deadlocks); treating each of the $24$ scenarios as an independent cluster, the structured-vs-scalar advantage is significant (scenario-clustered Wilcoxon $p{=}0.004$, bootstrap 95\% CI for the recovery-rate difference excluding zero).

\begin{table}[t]
\centering
\caption{RQ2: the full benchmark of $24$ mined multi-action scenarios, $N{=}5$, recovery [Wilson 95\% CI]. Support-only is the Grid-Agent rollback baseline.}
\label{tab:band24}
\footnotesize
\begin{tabular*}{\columnwidth}{@{\extracolsep{\fill}}lcc@{}}
\toprule
Policy & Recovery & 95\% CI \\
\midrule
Terminal                            & $0/120$  & $[0.00,0.03]$ \\
Scalar$_{\eta=0.05}$      & $96/120$ & $[0.72,0.86]$ \\
Support-only                            & $113/120$ & $[0.88,0.97]$ \\
Structured                       & $\mathbf{120/120}$ & $\mathbf{[0.97,1.00]}$ \\
\bottomrule
\end{tabular*}
\end{table}

\paragraph{RQ3: the projection-trap plateau is gate-induced.}
The trap leaves a population signature: over the full benchmark, structured admission drives every episode to the recovered origin, while scalar-admitted failures settle on a nonzero-residual plateau (Fig.~\ref{fig:trap}). The three operating points show increasing difficulty under the best per-scenario slack. Scalar admission recovers $7/7$ episodes in Scenario~B, at most $2/7$ in Scenario~C (mean final residual $5.94$ against a default of $17.59$), and at most $1/7$ in Scenario~A, whereas structured admission recovers $7/7$ in all three (representative trace in the supplementary material). When initialized from the scalar-admitted Scenario~C plateau, an MPC feasibility oracle (constant-setpoint baseline) reaches zero penalty in one rollout, which confirms that the plateau is gate-induced.

\begin{figure}[t]
\centering
\includegraphics[width=\columnwidth]{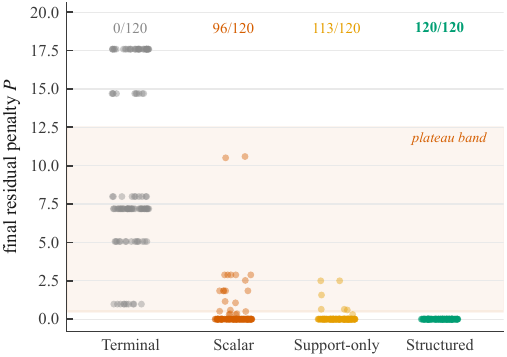}
\caption{RQ3: final residual penalty of all $480$ benchmark episodes ($24$ scenarios, $N{=}5$, four gates; horizontally jittered). The count above each column is the number of episodes fully recovered ($P{=}0$). Structured admission drives every episode to the recovered origin; scalar-admitted failures settle in a nonzero-residual plateau band; terminal episodes remain at their initial violation state.}
\label{fig:trap}
\end{figure}

\paragraph{RQ4: the admission architecture contains LLM-side compromise.}
A compromise suite evaluates the gate under three attack families: prompt injection, observation poisoning, and stall, the last in plain and RAG forms. It runs across the three mined scenarios plus matched benign controls ($60$ attack and $15$ benign episodes). No attack success, false recovery, or material worsening occurs in the $60$ attack episodes ($0/60$) or in the $15$ benign-control episodes ($0/15$). Containment is \emph{architectural}: the verifier never accesses the LLM's prompt or observation channels (Prop.~\ref{prop:inv}).

\paragraph{The per-branch predicate $\psithree$ is necessary.}
This suite establishes containment against channel compromise; whether the per-branch predicate is \emph{necessary} is a separate question, isolated by a dedicated $120$-case suite. \emph{Magnitude redistribution} (C1/C2) drives one retained branch toward failure while holding the overloaded-branch set and the aggregate penalty within nominal bounds, evading aggregate- and support-level monitoring. Across this suite a scalar gate admits the concentrating step in $43/120$ cases and the support-only Grid-Agent rollback baseline ($\psione\wedge\psitwo$) in $11/120$, whereas the full product order ($\psione\wedge\psitwo\wedge\psithree$) admits none ($0/120$; Fig.~\ref{fig:breach}): only the per-branch test $\psithree$ rejects the concentrating step.

\begin{figure}[t]
\centering
\includegraphics[width=\columnwidth]{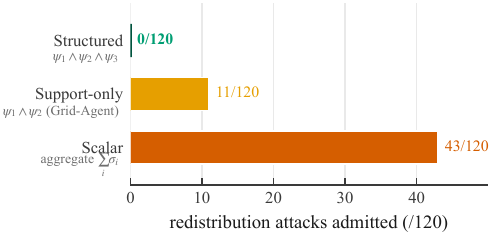}
\caption{RQ4: magnitude-redistribution attacks admitted, by admission tier. Each added predicate closes more attack surface: Scalar $43/120$, Support-only (Grid-Agent, $\psi_1\wedge\psi_2$) $11/120$, Structured ($\psi_1\wedge\psi_2\wedge\psi_3$) $\mathbf{0/120}$.}
\label{fig:breach}
\end{figure}

\paragraph{The complementary antichain on the full witness pool.}
Beyond the support-preserving redistribution of Fig.~\ref{fig:breach}, full-pool enumeration exposes the complementary Prop.~\ref{prop:gap} antichain (support expansion) and confirms the gap is representational. Over all $558$ physically-unsafe one-step actions in ANM, the sup-norm $\|\sigma\|_\infty$ admits $46$ and a lexicographic order on $(|S|,\sum_i\sigma_i)$ admits $88$, while the full product order admits $0/558$. These breaches instantiate the antichain: a new branch overloads while $\max_i\sigma_i$ falls. The same class recurs in CityLearn over $11{,}311$ physically-unsafe actions under an independent state-of-charge/feeder oracle (verifier-level enumeration, independent of model choice): the sup-norm admits $6761$, the full product order $0/11{,}311$ (ReAct replication in RQ5). Across pools the invariant comparison is zero-versus-nonzero: the full product order is the only gate that admits no unsafe action in either domain (supplementary material).

\paragraph{Safety versus liveness under multi-family stress.}
The containment gap widens with two physically incomparable constraint families active, where any scalarization must trade one against the other. Over all $42{,}410$ physically-unsafe actions on mined dual-family (voltage $+$ branch) ANM states, every tested scalar projection admits $40.6$--$53.6\%$, and support-only admits the largest fraction at $63.2\%$, whereas the full product order admits $0$. Support inclusion alone is therefore insufficient, and the per-branch guard $\psithree$ closes the multi-family surface. Recovery nevertheless remains feasible under this discipline: an oracle audit finds an admissible all-safe recovery order in \emph{all} $689$ mined dual-family scenarios. In ReAct traces on the two hardest dual-family scenarios ($3$ seeds each), every scalar-gated recovery passes through physically unsafe steps ($18$--$38\%$ of its penalty descent; worst, a $212\%$-loaded branch driven to $282\%$), whereas the product-order gate admits none. This reproduces the safety--liveness trade-off familiar from runtime shielding~\citep{alshiekh2018safe} (details in the supplementary material).

\paragraph{RQ5: the dichotomy transfers across model families and domains.}
At the ReAct level, the terminal-versus-structured dichotomy is family-independent and transfers across domains (Table~\ref{tab:gen}). Across Qwen3 (two sizes), Gemma-3~\citep{gemma3}, and Llama-3.1~\citep{llama3} on ANM, terminal admission deadlocks every model ($0/15$) while structured admission recovers completely ($15/15$). The same pattern recurs in CityLearn building-energy dispatch~\citep{citylearn}: terminal admission succeeds only on the single-step scenario ($5/15$), while structured admission recovers between $10/15$ and $15/15$ depending on the model. The admission rule thus transfers unchanged, whereas completion rates depend on model capability: the gate semantics show \emph{structural portability} to a distinct physical model.

\begin{table}[t]
\centering
\caption{RQ5: generalization across model families (ANM) and to a second domain (CityLearn), $N{=}5$ (recovery). CityLearn at $C_{\text{step}}{=}8$; $^\ast$Qwen3-8B reaches $15/15$ at $C_{\text{step}}{=}16$. ANM Structured per-model ($3$ scenarios pooled) Wilson 95\% CI $[0.80,1.00]$.}
\label{tab:gen}
\footnotesize
\setlength{\tabcolsep}{5pt}
\begin{tabular*}{\columnwidth}{@{\extracolsep{\fill}}lcccc@{}}
\toprule
 & \multicolumn{2}{c}{ANM} & \multicolumn{2}{c}{CityLearn} \\
\cmidrule(lr){2-3}\cmidrule(lr){4-5}
Model & Term. & Structured & Term. & Structured \\
\midrule
Qwen3-8B     & $0/15$ & $\mathbf{15/15}$ & $5/15$ & $14/15^\ast$ \\
Qwen3-14B    & $0/15$ & $\mathbf{15/15}$ & $5/15$ & $\mathbf{15/15}$ \\
Gemma-3-12B  & $0/15$ & $\mathbf{15/15}$ & $5/15$ & $10/15$ \\
Llama-3.1-8B & $0/15$ & $\mathbf{15/15}$ & $5/15$ & $\mathbf{15/15}$ \\
\bottomrule
\end{tabular*}
\end{table}

\section{From Admission Gate to Process Reward}
\label{sec:reward}

\begin{figure*}[t]
\centering
\includegraphics[width=0.86\textwidth]{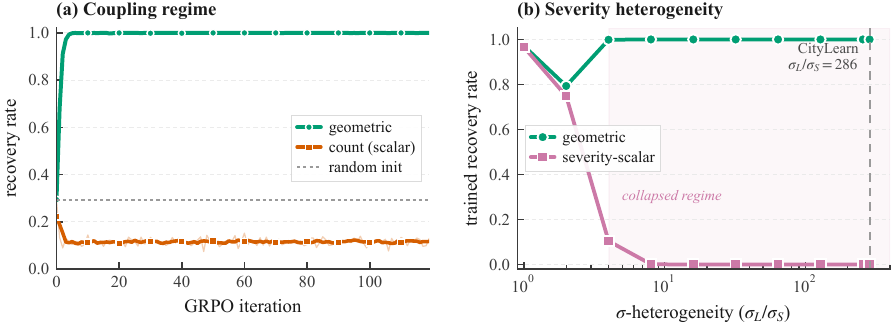}
\caption{Training dynamics in the multi-family stress regimes. (a)~Coupling: the geometric reward converges to full recovery within five GRPO iterations; the count reward---blind to the drift its own actions cause---is driven below its random initialization. (b)~Trained recovery versus $\sigma$-heterogeneity: failure emerges at $\sigma_L/\sigma_S\!\approx\!4$ (both arms dip briefly at ratio $2$); CityLearn's measured ratio ($286$, dashed) lies deep in the collapsed regime; in the homogeneous limit all arms tie, ruling out the construction.}
\label{fig:trainenh}
\end{figure*}

The product order is also the representation required at a \emph{second design point}: not only the criterion a runtime gate admits on, but the object a policy should \emph{internalize}. We reuse the verifier's outputs, namely the graded verdict and the persisted branch state $\Phi$ of Eq.~\ref{eq:phi}, to define a Group Relative Policy Optimization (GRPO) process reward~\citep{deepseekmath2024grpo,lightman2024prm}. This requires no verifier changes and no task-specific outcome labels. We fine-tune Qwen3-8B with low-rank adaptation (LoRA)~\citep{hu2022lora} on the $9$ hardest multi-action ANM scenarios, which the ungated base fails to recover reliably. The primary outcome is \emph{ungated} recovery: the gate is removed at evaluation, so recovery measures what the policy internalized. An admitted \texttt{SAFE\_PROGRESS} step from $s$ to shadow post-state $\hat s$ is scored on the branch state itself; writing $\sigma(A)=\sum_{k\in A}\sigma_k$ and $(x)_+=\max(0,x)$,
\begin{align}
\rho_{\mathrm{geom}} &= \tfrac{W_2}{|F|}\!\sum_{f\in F}\!\frac{\sigma(E\cap f)}{\sigma(f)}
+\tfrac{W_3}{|F|}\!\sum_{f\in F}\!\frac{\sum_{k\in U\cap f}(\sigma_k-\hat\sigma_k)_+}{\sigma(f)}
\nonumber\\[-1pt]
&\quad-\,W_d\,\min\!\Big(1,\,\max_{k\in U}\tfrac{(\hat\sigma_k-\sigma_k)_+}{\sigma_k}\Big),
\label{eq:rhogeom}
\end{align}
where $E=S\setminus\hat S$ and $U=S\cap\hat S$ are the eliminated and surviving overloaded branches, $F$ the constraint families partitioning $S$, and $(W_2,W_3,W_d)=(0.6,0.3,0.3)$. Three choices matter here. Support elimination outranks severity reduction on surviving branches, echoing the admission order. The drift term debits any surviving branch driven worse. Per-family normalization keeps a high-$\sigma$ family from hijacking the signal. The count projection replaces $\rho_{\mathrm{geom}}$ by $(|S|-|\hat S|)/|S|$, and graded \texttt{PASS}/\texttt{FAIL} verdicts anchor the scale identically across arms (full scaffold in the supplementary material).

\paragraph{Violation geometry yields a higher-fidelity process reward.}
We measure reward-layer fidelity by scoring every legal single-setpoint action at each trap state against the true one-step recovery value. The geometric reward never misorders an action pair with distinct oracle values (reward-level confusion $0.000$ vs.\ $0.25$ for its count projection), and it dominates in all $24$ mined scenarios (sign test $p{=}1.2{\times}10^{-7}$). By contrast, a binary admit/reject signal cannot rank admissible steps at all ($1.0$; Fig.~\ref{fig:dualladder}(b)). The separation is a property of the reward, not of the rollout distribution. It persists over the full legal action set ($0.04$ for the geometric reward, $0.19$ for the count projection, and $0.63$ for the binary signal) and against an oracle-optimal binarization ($0.42$), which is the ceiling of a process reward trained on binary outcome labels~\citep{lightman2024prm}. The geometric ranking matches the oracle with a Spearman correlation of $0.986$; robustness controls are in the supplementary material.

\paragraph{The violation geometry is the active ingredient.}
Three reward arms share the same rollout gate and differ only in the signal. The full geometric reward, descent in the product order over $\Phi$, is the only one that keeps the trained policy above the untrained base ungated ($\mathbf{0.844}$ vs.\ $0.778$). A verdict ladder ($0.733$) or a violation count ($0.667$) leaves it \emph{worse} than untrained, which indicates gate-dependent behavior rather than internalized recovery (supplementary material). On $15$ held-out scenarios the geometric policy recovers $14/15$ against the base's $9/15$, both ungated (one-sided Fisher exact $p{=}0.04$).

\paragraph{The same scalarization gap, on the training side.}
This is the training-side counterpart of the admission-side impossibility: the scalarization that is unsound as an admission rule (Prop.~\ref{prop:gap}) collapses the GRPO advantage signal as a reward.
\begin{proposition}[Advantage collapse]
\label{prop:collapse}
With $B$ concurrent violation branches, a count-based reward takes at most $B{+}1$ distinct values. Action pairs within a count-equivalence class therefore receive identical immediate reward and have no step-level advantage separation. Along a continuous partial reduction of any single branch, the count reward is piecewise constant, with zero gradient except at elimination points, whereas the graded geometric reward is strictly monotone in that reduction.
\end{proposition}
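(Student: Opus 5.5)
The plan is to prove the three claims of Proposition~\ref{prop:collapse} in order, each as a direct consequence of how the count projection and $\rho_{\mathrm{geom}}$ in Eq.~\ref{eq:rhogeom} are defined.

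\emph{Step 1: counting distinct values.} Fix the pre-state $s$, and let $|S(s)|=B$. On admitted steps the support inclusion $\psitwo$ forces $\hat S\subseteq S$, so $|\hat S|\in\{0,1,\dots,B\}$. The count reward $(|S|-|\hat S|)/|S|$ is therefore an injective function of $|\hat S|$, giving at most $B{+}1$ values. The graded \texttt{PASS}/\texttt{FAIL} anchors are fixed constants identical across arms. \texttt{PASS} coincides with $|\hat S|=0$. \texttt{FAIL} adds at most one further constant, so I will state the bound for the \texttt{SAFE\_PROGRESS}/\texttt{PASS} range, or absorb \texttt{FAIL} into the convention. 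I expect this bookkeeping about the \texttt{FAIL} anchor to be the one delicate point, and I will resolve it by restricting the statement to admitted steps, as the proposition implicitly does.

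\emph{Step 2: no advantage separation within a class.} GRPO's step-level advantage for a sampled action is its reward minus the group mean, scaled by the group standard deviation. Suppose two actions $a,a'$ from the same state satisfy $|\hat S(a)|=|\hat S(a')|$. Then their count rewards are equal, so their advantages are equal. Their advantage difference is therefore zero, whatever the rest of the group contains. This holds regardless of how differently $a$ and $a'$ move $\sigma$ on surviving branches. For example, the redistribution $(10,10)\to(19,0.1)$ and the identity step receive the same count reward.

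\emph{Step 3: piecewise constancy versus strict monotonicity.} Parametrize a continuous partial reduction of a single branch $k\in S$ by $\hat\sigma_k=\sigma_k-t$ for $t\in[0,\sigma_k]$, with all other components held fixed.
\begin{itemize}
\item \emph{Count reward.} For $t<\sigma_k$ we have $k\in\hat S$, so $|\hat S|$ is constant and the count reward is constant with zero derivative. It jumps only at the elimination point $t=\sigma_k$.
\item \emph{Geometric reward.} For $t\in(0,\sigma_k)$, branch $k$ lies in $U$. The $W_2$ term is unchanged because $E$ is unchanged. The drift term is unchanged because $(\hat\sigma_k-\sigma_k)_+=0$ and the other branches are fixed. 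The $W_3$ term equals $W_3 t/(|F|\,\sigma(f_k))$ plus a constant, which is strictly increasing in $t$ since $W_3>0$ and $\sigma(f_k)>0$.
\end{itemize}
It remains to check the endpoint $t=\sigma_k$, where $k$ moves from $U$ to $E$. The $W_3$ contribution there is $W_3\sigma_k/(|F|\sigma(f_k))$. The $W_2$ contribution is $W_2\sigma_k/(|F|\sigma(f_k))$. Since $W_2=0.6>0.3=W_3$, the reward jumps upward at elimination. Strict monotonicity therefore holds on the whole closed interval, which completes the proof.
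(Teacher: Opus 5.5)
Your proposal is correct and follows the paper's own (very terse) argument: the count delta is confined to $\{0,\dots,B\}$, equal reward gives equal GRPO advantage within a class, $|\hat S|$ is frozen until elimination, and the $W_3$ severity term is strictly increasing. Beyond the paper, you also check that the $W_2$ and drift terms stay constant and that $\rho_{\mathrm{geom}}$ jumps upward at elimination because $W_2>W_3$; at $t=\sigma_k$ branch $k$ lies in $E$, so $W_3\sigma_k/(|F|\sigma(f_k))$ is the left limit of the $W_3$ term, not its value there.

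One bookkeeping correction concerns your Step~1. The \texttt{PASS} anchor $1+\tfrac12\bar m(\hat s)$ is margin-dependent, \texttt{FAIL} is graded by $c_{\mathrm{sev}}$ (up to three values), and your $(10,10)\to(19,0.1)$ example is rejected by $\psithree$ under the shared structured gate. So state the $B{+}1$ bound and the within-class collapse for $\rho_{\mathrm{count}}$ itself (equivalently, on \texttt{SAFE\_PROGRESS} steps), as the paper implicitly does, rather than for all admitted steps.
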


\noindent (Proof in the supplementary material.) Empirically the flat fraction of the count reward is $98\%$ of the continuous action range, inducing a \emph{failure class}: with incomparable constraint families active, no tested scalar projection of the reward attains full recovery in both regimes; only the full geometric reward does. We isolate the mechanism in a controlled tabular GRPO setting (Table~\ref{tab:failureclass}; $16$ training seeds each; design in the supplementary material): the severity scalar is hijacked by raw magnitude under $\sigma$-heterogeneity, the count reward is blind to the cross-family drift its own actions cause under coupling (Fig.~\ref{fig:trainenh}(a)), and restoring the drift term closes only the coupling regime (Table~\ref{tab:failureclass}).

\paragraph{The collapse is a threshold effect that reproduces at scale.}
The severity-scalar failure emerges at $\sigma_L/\sigma_S\!\approx\!4$, and CityLearn's measured heterogeneity ratio ($286$) lies deep in the collapsed regime (Fig.~\ref{fig:trainenh}(b)). A four-building CityLearn suite reproduces the effect at scale: the geometric reward's advantage signal correlates with the true one-step value (Spearman's $\rho{=}0.96$ vs.\ $0.79$ for count), and at the same violation count the count arm leaves the battery family with $14\%$ higher residual severity (supplementary material).

\begin{table}[t]
\centering
\caption{The scalarization failure class: trained recovery in the two multi-family stress regimes ($16$ seeds each).}
\label{tab:failureclass}
\footnotesize
\begin{tabular*}{\columnwidth}{@{\extracolsep{\fill}}lcc@{}}
\toprule
 & \multicolumn{2}{c}{trained recovery} \\
\cmidrule(lr){2-3}
Training reward & $\sigma$-het & coupling \\
\midrule
\textbf{geometric (full $\Phi$)} & $\mathbf{1.00}$ & $\mathbf{1.00}$ \\
severity-scalar ($\Sigma\sigma$ descent) & $0.00$ & $0.04$ \\
count ($|S|$ descent) & $0.80$ & $0.12$ \\
count with drift penalty & $0.80$ & $1.00$ \\
\bottomrule
\end{tabular*}
\end{table}

\section{Conclusion}
\label{sec:conclusion}

We recast the admission gate as a search operator and identify the \emph{scalar projection trap}: collapsing the violation state to a scalar commits trajectories to residual plateaus. The formal results establish this representational gap independently of LLM behavior (Prop.~\ref{prop:gap} and Prop.~\ref{prop:inv}), and the experiments show $21/21$ recoveries for structured admission, against $9/21$ for the best scalar gate and $0/21$ for terminal admission. Under multi-family stress the product order keeps every intermediate step safe, whereas scalar gates reach recovery only through unsafe redistributions. Reused as a process reward, the same signal lets the policy internalize recovery. Preserving the violation geometry is required at both design points.

\appendix

\section{Attack-Family Constructions}
\label{supp:attacks}

We implement four attack families under the compromised-prompt/compromised-observation capability class of Section~\ref{sec:method}. \emph{Prompt injection} (C1): a high-priority operator-impersonating override that pushes an aggressive fixed setpoint. \emph{Observation poisoning} (C2): a fabricated ``all readings nominal'' stanza spliced into every observation, contradicting the violation list the verifier re-derives from the live simulator. \emph{Stall} (C1): near-no-op setpoints that stay admissible while never recovering; a ``stall-RAG'' variant instructs the same behavior through a poisoned retrieval context. \emph{Magnitude redistribution} (C1/C2): driving a single retained branch toward failure while holding the overloaded-branch set and aggregate penalty within nominal bounds, masking per-branch overload behind nominal aggregate- and support-level readings.

The compromise suite (RQ4) runs the three attack families in four configurations---prompt injection, observation poisoning, and stall in plain and RAG forms---across the three mined multi-action scenarios at $N{=}5$ per configuration, plus a matched benign control ($60$ attack $+\,15$ benign $=75$ episodes). Across the $60$ attack episodes we observe no attack success, false recovery, or material worsening ($0/60$; Wilson 95\% CI $[0.00,0.06]$), and none of these failures occurs in the $15$ benign-control episodes ($0/15$). Under prompt injection the rejection rate climbs sharply. The fourth family, magnitude redistribution, is evaluated against competing gates in the $120$-case redistribution suite of Fig.~\ref{fig:breach}: its cases concentrate severity on a single retained branch while preserving the overloaded-branch set and holding the aggregate penalty within nominal bounds, slipping past aggregate- and support-level gates ($43/120$ and $11/120$) while the per-branch test $\psithree$ rejects every one.

\paragraph{Strong-scalar baselines on the full witness pool.}
Beyond the balanced $120$-case suite, we enumerate \emph{every} physically-unsafe one-step action in each domain---$558$ in ANM, $11{,}311$ in CityLearn (the latter under an independent state-of-charge/feeder-overload oracle, verifier-level, no LLM in the loop)---and evaluate all five gates (Table~\ref{tab:fullpool}). The denominator is a constructed witness pool, not a natural-attack frequency, and absolute breach counts reflect each pool's category composition, so they are \emph{not} comparable across domains; within a fixed pool, a lower unsafe-admission count indicates stricter containment. The one domain-invariant statement is that the full product order admits zero in both. The strongest scalar projections fail on the Prop.~\ref{prop:gap} antichain in both domains: in ANM all $46$ sup-norm breaches are support expansions with $\max_i\sigma_i$ non-increasing; in CityLearn $6670$ of the $6761$ sup-norm breaches are the same support-expansion class, the newly violating element being a battery or feeder, confirming the mechanism in a second physical system. Finally, we stress the gates with two simultaneously active constraint families: a third enumeration over mined dual-family (voltage $+$ branch-loading) ANM states yields $42{,}410$ physically-unsafe actions: the aggregate admits $17{,}200$ ($40.6\%$), the normalized aggregate $18{,}311$, the sup-norm $17{,}663$, the lexicographic order $22{,}729$, support-only $26{,}817$ ($63.2\%$), and the product order $0$.

\begin{table}[h]
\centering
\caption{Unsafe admissions on the full physically-unsafe witness pool (verifier-level enumeration; counts reflect each pool's category mix and are not comparable across domains; within a pool, lower is stricter containment). The full product order is the only gate that admits no unsafe action in either domain.}
\label{tab:fullpool}
\scriptsize
\setlength{\tabcolsep}{3pt}
\begin{tabular*}{\columnwidth}{@{\extracolsep{\fill}}lcc@{}}
\toprule
Gate & ANM ($N{=}558$) & CityLearn ($N{=}11{,}311$) \\
\midrule
Scalar---aggregate $\sum_i\sigma_i$ & $108$ & $7204$ \\
Scalar---sup-norm $\|\sigma\|_\infty$ & $46$ & $6761$ \\
Scalar---lex.\ $(|S|,\sum_i\sigma_i)$ & $88$ & $7650$ \\
Support-only (Grid-Agent $\psione\wedge\psitwo$) & $258$ & $432$ \\
\textbf{Structured} ($\psione\wedge\psitwo\wedge\psithree$) & $\mathbf{0}$ & $\mathbf{0}$ \\
\bottomrule
\end{tabular*}
\end{table}

\section{Proofs}
\label{supp:proofs}

\paragraph{Proposition~1 (Scalarization gap).}
Any scalar surrogate $f:(S,\sigma)\to\mathbb{R}_{\ge0}$ composes the vector-valued violation state with a total order. The pair $\Phi(s)=(\{A\},2\delta e_A)$, $\Phi(s')=(\{A,B\},0.9\delta(e_A{+}e_B))$ forms a $\preceq$-antichain: $S(s')\not\subseteq S(s)$ blocks $s'\preceq s$, while $\sigma_A(s)=2\delta>0.9\delta=\sigma_A(s')$ blocks $s\preceq s'$. Because $f$ maps both into the totally ordered $\mathbb{R}_{\ge0}$, exactly one of $f(s)\le f(s')$ or $f(s')\le f(s)$ holds (or both, if equal), so for any slack $\eta\ge0$ the gate admits at least one of the two transitions, each non-improving under $\preceq$ (one introduces branch $B$; the other inflates $\sigma_A$). For the canonical aggregate $P=\sum_i\sigma_i$, $P(s')=1.8\delta<2\delta=P(s)$, so $P(s')\le(1{+}\eta)P(s)$ for all $\eta\ge0$ and no slack removes the acceptance of $s\to s'$. The argument depends only on $\preceq$ admitting an antichain, hence applies to every scalar surrogate. $\square$

\paragraph{Proposition~2 (Admission invariant).}
By induction on $t$. Base: trivially $S(s_0)\subseteq S(s_0)$ and $\sigma_i(s_0)\le\alpha^0\sigma_i(s_0)$. Step: an admitted action passes $\psitwo\wedge\psithree$ on the shadow, and shadow fidelity makes the live post-state equal the shadow, so $S(s_{t+1})\subseteq S(s_t)$ (giving $S(s_{t+1})\subseteq S(s_0)$ by the inductive hypothesis) and $\sigma_i(s_{t+1})\le\max(\alpha\sigma_i(s_t),\sigma_i(s_t)+\varepsilon)\le\alpha\sigma_i(s_t)+\varepsilon$ for all $i\in S(s_t)$ (and $\sigma_i\equiv0$ off $S(s_t)$). Unrolling $\sigma_i(s_{t+1})\le\alpha\sigma_i(s_t)+\varepsilon$ gives $\sigma_i(s_t)\le\alpha^{t}\sigma_i(s_0)+\varepsilon\sum_{j=0}^{t-1}\alpha^{j}=\alpha^{t}\sigma_i(s_0)+\varepsilon\tfrac{\alpha^{t}-1}{\alpha-1}$; at $\alpha{=}1$ the bound reads $\sigma_i(s_0)+t\varepsilon$. Neither step uses any property of the action source, so the support invariant ($\alpha$-free) holds for any gated policy, including a compromised LLM. The severity term is a worst-case per-step drift envelope, not a descent guarantee: convergence to $S=\emptyset$ depends on proposal quality. $\square$

\paragraph{Proposition~3 (Advantage collapse under scalarization).}
Immediate from the definitions. With $B$ concurrent violation branches, the count delta of cleared violations ranges over $\{0,\dots,B\}$, taking at most $B{+}1$ distinct values; hence every action pair within a count-equivalence class receives identical immediate reward and there is no step-level advantage separation between them. Along a continuous partial reduction of any single branch's severity that does not eliminate it, $|S|$ is unchanged, so the count reward is piecewise constant, with zero gradient except at the discrete elimination points. The graded geometric reward's severity term is strictly monotone in that same reduction, so it retains a nonzero advantage signal everywhere on the continuum. Empirically the flat fraction is $98\%$ of the continuous action range. $\square$

\section{Robustness and Cross-Domain Generalization}
\label{supp:gen}

\paragraph{Scenario pool composition.}
The mined 600-scenario ANM pool classifies by recoverability as $153$ single-action, $24$ multi-action, $252$ MPC-residual, and $171$ trivial; the $24$ multi-action scenarios form the main benchmark.

\paragraph{$\alpha$-window sweep.}
Recovery holds across an $\alpha$-window: $15/15$ for $\alpha\in[1.05,1.20]$ and $14/15$ at $\alpha{=}1.02$ ($N{=}5$), consistent with the $\alpha$-free support invariant (Prop.~\ref{prop:inv}); the magnitude guard $\psithree$ tightens as $\alpha\to1$ without disturbing support-driven recovery. The full $24$-scenario benchmark (Table~\ref{tab:band24}) and the cross-family replication (Table~\ref{tab:gen}) are reported in the main text.

\paragraph{Cross-domain (CityLearn).}
The full cross-domain replication is reported in Table~\ref{tab:gen} (RQ5): the gate semantics recur in CityLearn building-energy dispatch~\citep{citylearn} over three district battery-storage scenarios (CL-1--CL-3; state-of-charge floor/ceiling and feeder-export limits, $\Phi$ indexed by the violating building or feeder). Gemma-3-12B on CL-2 is the only capability-related exception. The gate admits every Gemma proposal in that scenario (all \texttt{SAFE\_PROGRESS}, with no rejections), so the admission rule transfers unchanged, but the sequence stalls before the terminal action the other models reach. Completion therefore tracks proposal quality rather than gate semantics.

\paragraph{Scenario-level trap trace.}
Fig.~\ref{fig:traptrace} shows the representative Scenario-C trace underlying RQ3: the scalar gate admits one early locally-improving proposal and plateaus at nonzero residual (best slack $\eta{=}0.05$; the plateau recurs at all four slack settings, best $2/7$), while structured admission descends to full recovery.

\begin{figure}[t]
\centering
\includegraphics[width=\columnwidth]{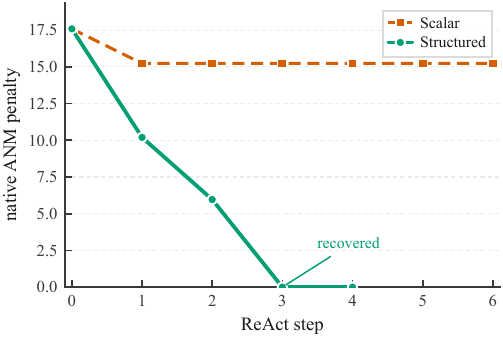}
\caption{Scalar projection trap on Scenario~C (RQ3): representative trace at the best slack $\eta{=}0.05$; structured admission descends to full recovery in three steps while the scalar gate plateaus.}
\label{fig:traptrace}
\end{figure}

\paragraph{Dual-family recovery audit.}
We mine $689$ oracle-recoverable dual-family (voltage $+$ branch-loading) ANM states and replay oracle device orders through the product-order verifier: an admissible all-safe recovery order exists in all $689/689$ cases, so per-step safety never renders recovery infeasible. At the ReAct level ($32$-step budget, Qwen3-14B, temperature $0$), we rescore every admitted step with the RQ4 physical oracle. On the two hardest such states, every scalar-gated recovery passes through physically unsafe steps ($18$--$38\%$ of its penalty descent; worst, a $212\%$-loaded branch driven to $282\%$ loading), including support expansions of the same class enumerated in the dual-family admission audit. The product-order gate admits zero unsafe steps.

\section{The Verifier as a Process Reward: Extended Analysis}
\label{supp:reward}

\paragraph{Setup.}
We fine-tune Qwen3-8B with LoRA~\citep{hu2022lora} on the $9$ hardest multi-action ANM scenarios (greedy decoding, without chain-of-thought). Three reward arms share the same structured rollout gate and differ only in the reward signal: \textbf{Geometric} rewards descent in the product order over $\Phi=(S,\sigma)$; \textbf{Count} rewards the count-delta of cleared violations; and a \textbf{Verdict-only} ablation strips the geometric component, leaving only the verdict ladder. We evaluate the checkpoint after the second GRPO~\citep{deepseekmath2024grpo,lightman2024prm} iteration, pooled over $5$ training seeds (Table~\ref{tab:reward}). On a separate $15$-scenario held-out set, the geometric policy reaches $14/15$ ungated recovery, against $9/15$ for the base policy (RQ-reward section).

\begin{table}[t]
\centering
\caption{Verifier-as-reward on the $9$ hardest ANM scenarios ($5$ seeds pooled; recovery [Wilson 95\% CI]). \emph{Ungated} recovery (gate removed at eval) measures internalization: only the full geometric reward stays above the untrained base.}
\label{tab:reward}
\footnotesize
\setlength{\tabcolsep}{5pt}
\begin{tabular*}{\columnwidth}{@{\extracolsep{\fill}}lccc@{}}
\toprule
Reward (signal) & gated & \textbf{ungated} & steps$\downarrow$ \\
\midrule
base (no LoRA) & $0.778$ & $0.778$ & $4.71$ \\
\textbf{Geometric} ($\Phi$ descent) & $\mathbf{0.956}$ & $\mathbf{0.844}$ {\scriptsize[0.71,0.92]} & $\mathbf{3.64}$ \\
Verdict-only (ladder) & $0.889$ & $0.733$ {\scriptsize[0.59,0.84]} & $4.08$ \\
Count (count-delta) & $0.889$ & $0.667$ {\scriptsize[0.52,0.79]} & $3.84$ \\
\bottomrule
\end{tabular*}
\end{table}

\paragraph{Reward formulas.}
All arms share one verdict scaffold and differ only in how an admitted \texttt{SAFE\_PROGRESS} step is scored. For a step from pre-state $s$ to shadow post-state $\hat s$ with verdict $v$,
\begin{equation}
r(s,\hat s)=\!
\begin{cases}
1+\tfrac12\,\bar m(\hat s) & v=\texttt{PASS}\\[1pt]
\rho_{\bullet}(s,\hat s) & v=\texttt{SAFE\_PROGRESS}\\[1pt]
-\max\!\big(0.3,\,c_{\mathrm{sev}}(\hat s)\big) & v=\texttt{FAIL}\\[1pt]
-1 & v=\texttt{ERROR},
\end{cases}
\label{eq:reward-scaffold}
\end{equation}
where $\bar m\in[0,1]$ is the mean normalized constraint margin and $c_{\mathrm{sev}}\in\{0.3,0.6,1.0\}$ grades the worst residual violation (warning/violation/critical); a fixed step cost $0.05$ is subtracted from every step and a terminal recovery bonus $+1$ is added at the trajectory level, both shared across arms. Writing $S,\hat S$ for the pre/post overloaded-branch supports with per-branch severities $\sigma_k,\hat\sigma_k$, $E=S\setminus\hat S$ for the eliminated branches, $U=S\cap\hat S$ for the surviving ones, and $f\in F$ for the constraint families partitioning $S$, the three \texttt{SAFE\_PROGRESS} scorings are (expanding the compact notation of Eq.~\ref{eq:rhogeom})
\begin{align}
\rho_{\mathrm{geom}} &= \tfrac{W_2}{|F|}\!\sum_{f\in F}\!\frac{\sum_{k\in E\cap f}\sigma_k}{\sum_{k\in f}\sigma_k}
\nonumber\\[-1pt]
&\quad+\tfrac{W_3}{|F|}\!\sum_{f\in F}\!\frac{\sum_{k\in U\cap f}\max(0,\sigma_k-\hat\sigma_k)}{\sum_{k\in f}\sigma_k}
\nonumber\\[-1pt]
&\quad-\,W_d\,\min\!\Big(1,\,\max_{k\in U}\tfrac{\max(0,\,\hat\sigma_k-\sigma_k)}{\sigma_k}\Big),
\label{eq:rho-geom}\\[2pt]
\rho_{\mathrm{count}} &= \frac{|S|-|\hat S|}{|S|},
\qquad
\rho_{\mathrm{verdict}} \equiv 0.5,
\label{eq:rho-count}
\end{align}
with weights $(W_2,W_3,W_d)=(0.6,0.3,0.3)$, so severity-weighted support elimination ($\psi_2$) outranks per-branch severity reduction ($\psi_3$), echoing the admission product order; the third term penalizes any surviving branch driven worse. The geometric arm normalizes \emph{within} each family and averages families with equal weight, so no high-$\sigma$ family hijacks the signal; for a single family it reduces exactly to $\Sigma\sigma$-normalization, leaving the single-type ANM result unchanged. A count-preserving magnitude reallocation ($S{=}\hat S$) earns $\rho_{\mathrm{geom}}\!\approx\!0$ and is debited by the drift term, yet is invisible to $\rho_{\mathrm{count}}$; the \textbf{Verdict-only} arm flattens $\rho$ to a constant, keeping the \texttt{PASS}${>}$\texttt{SAFE\_PROGRESS}${>}$\texttt{FAIL} ladder while discarding within-step geometry. The failure-class study's severity scalar instead uses the total-severity fraction $\rho_{\mathrm{sev}}=\big(\sum_{k\in S}\sigma_k-\sum_{k\in\hat S}\hat\sigma_k\big)/\sum_{k\in S}\sigma_k$, which retains magnitude but collapses the physically incomparable families into one dimension.

\paragraph{Reward-layer fidelity controls.}
The reward-layer comparison in the main text (geometric confusion $0.000$ vs.\ $0.25$ for count over $24/24$ scenarios) is stable under four controls (Fig.~\ref{fig:reward-controls}). \emph{(i) On-policy distribution.} Scoring confusion on the states visited under three rollout policies---greedy on the geometric reward, greedy on the count reward, and uniform random---keeps the geometric reward at $0.000$ while count ranges $0.21$--$0.22$, so the separation is a property of the reward rather than of a chosen visitation. \emph{(ii) Complexity scaling.} On controlled trap states with $B$ simultaneous violation branches, count confusion grows from $0.347$ at $B{=}2$ to $0.465$ at $B{=}10$ (Pearson $r{=}0.889$) as more distinct actions collapse onto the same count-delta level, while the geometric reward stays at $0.000$: the representational gap widens with problem complexity. \emph{(iii) Exactness.} The simulator-computed geometric reward retains its fidelity advantage under additive Gaussian perturbation of scale $s\sigma$: its misordering rate rises from $0.011$ ($s{=}0$) past the count projection's $0.157$ only once $s$ exceeds ${\approx}0.8$---the margin exact simulation affords over a learned approximation. \emph{(iv) Exploit resistance.} Fidelity also bounds how easily the reward is gamed: a binary admit/reject reward pays $+0.5$ for both a \emph{stall}, an admitted no-op with $\hat s{=}s$, and a \emph{drift}, a count-preserving reallocation that worsens a branch. An $8$-step stall therefore accrues $4.0$ against $1.5$ for recovering in three steps. The count reward is likewise indifferent to drift, whereas the severity-scalar and geometric rewards both score ${\approx}0$ on a stall and penalize drift. Resistance to both hacks rises with the constraint geometry the reward preserves.

\begin{figure*}[t]
\centering
\includegraphics[width=0.86\textwidth]{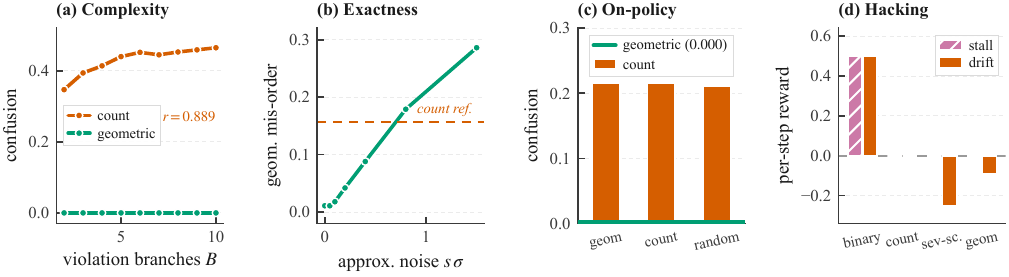}
\caption{Reward-layer fidelity controls. (a)~Reward-level confusion grows with the number of simultaneous violation branches $B$ for the count reward (Pearson $r{=}0.889$) while the geometric reward stays at $0.000$. (b)~The geometric reward retains its advantage over the count projection under additive noise up to $s{\approx}0.8\,\sigma$. (c)~The fidelity gap holds on states visited under three rollout policies (geometric confusion $0.000$ throughout; count $0.21$--$0.22$). (d)~A binary admit/reject reward pays $+0.5$ for both stall and drift; the count reward is blind to drift, which the severity-scalar and geometric rewards penalize.}
\label{fig:reward-controls}
\end{figure*}

\paragraph{The multi-family failure class.}
When several physically incomparable constraint families are active at once, any scalar reward must collapse the product order into one dimension, and the representational gap of Prop.~\ref{prop:gap} reappears in training dynamics as advantage collapse (Prop.~\ref{prop:collapse}). We run two controlled mechanism-isolation training-dynamics analyses~\citep{krakovna2020specification}, each a minimal multi-family recovery regime isolating one component of the violation geometry, with GRPO on the unmodified reward functions ($16$ training seeds each; trained recovery in Table~\ref{tab:failureclass}). In the \emph{$\sigma$-heterogeneity regime} (one large-$\sigma$ family, one small-$\sigma$ family whose violation is urgent), the severity scalar is hijacked by raw magnitude, chasing the large family and recovering $0\%$, while per-family normalization makes the urgent family's elimination worth a full family, and the geometric arm recovers $100\%$. In the \emph{coupling regime} (clearing the large family drifts the small family toward an irreversible limit), the count reward is blind to the drift its own actions cause and trains to $12\%$; the drift-aware geometric reward trains to $100\%$. Restoring the drift term rescues count in the coupling regime ($1.00$) but not cross-family priority ($0.80$ under $\sigma$-heterogeneity), and the severity scalar fails in both---the two regimes require distinct components that only the full geometric reward combines.

The $\sigma$-heterogeneity sweep (Fig.~\ref{fig:trainenh}(b)) shows the failure is not an artifact of a chosen operating point: it emerges at $\sigma_L/\sigma_S\!\approx\!4$, saturates from $8$ onward (CityLearn's measured heterogeneity is $286$), and vanishes in the homogeneous limit. The separation holds across the full $4{\times}4$ coupling--floor grid (Fig.~\ref{fig:failgrid}): geometric $\ge0.97$ in all $16$ cells, count $\le0.42$ in $15$; the single cell where the count reward retains recovery lies at the weakest coupling. An MLP policy reproduces the same ordering ($1.00/0.13/0.05$ for geometric/count/severity-scalar), so the pattern is not specific to the policy class.

\begin{figure}[t]
\centering
\includegraphics[width=\columnwidth]{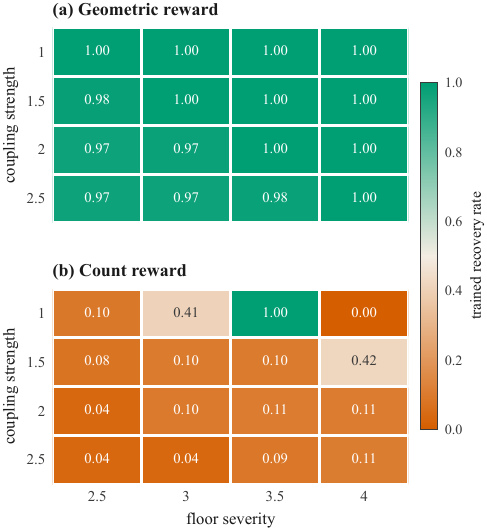}
\caption{Trained recovery across the $4{\times}4$ coupling--floor grid. The geometric reward recovers $\ge0.97$ in all $16$ cells; the count reward collapses ($\le0.42$) in $15$ of $16$, recovering only in the single cell $(\text{coupling}, c_{\text{floor}}){=}(1.0,3.5)$, at the weakest coupling in the grid.}
\label{fig:failgrid}
\end{figure}

\paragraph{The mechanism at full scale.}
Training Qwen3-8B with GRPO on the multi-family CityLearn suite (four buildings, $\sigma$-het $286$) reproduces the mechanism in a second, larger system and shows that trajectory-return credit is needed to propagate delayed cross-family consequences, since a per-step z-score cannot assign the cost of flooring the battery family back to the early step that caused it. With trajectory-return credit the trained policies separate along the axis Prop.~\ref{prop:collapse} predicts. Across seeds the geometric reward's advantage signal correlates with the true one-step action value (Spearman's $\rho{=}0.96$ vs.\ $0.79$ for count). After training, at the same violation count, the count arm leaves the battery family with $14\%$ higher residual severity than the geometric arm, which is the signature of a severity-blind signal.

\bibliography{refs}

\end{document}